\numberwithin{equation}{section} 
\newtheorem{theorem}{Theorem}[section]                   
\newtheorem{definition}[theorem]{Definition}
\newcommand{\new}[1]{{#1}}
\title{Iterative Thresholding for Demixing \\ Structured Superpositions in High Dimensions}
\author{
  Mohammadreza Soltani \\
  Iowa State University\\
 \And
 Chinmay Hegde \\
 Iowa State University
}
\begin{document}

\maketitle

\begin{abstract}
We consider the demixing problem of two (or more) high-dimensional vectors from nonlinear observations when the number of such observations is far less than the ambient dimension of the underlying  vectors. Specifically, we demonstrate an algorithm that stably estimate the underlying components under general \emph{structured sparsity} assumptions on these components. Specifically, we show that for certain types of structured superposition models, our method provably recovers the components given merely $n = \mathcal{O}(s)$ samples where $s$ denotes the number of nonzero entries in the underlying components. Moreover, our method achieves a fast (linear) convergence rate, and also exhibits fast (near-linear) per-iteration complexity for certain types of structured models. We also provide a range of simulations to illustrate the
performance of the proposed algorithm.
\end{abstract}

\section{Introduction}
\label{Intro}
The \emph{demixing} problem involves disentangling two (or more) high-dimensional vectors from their linear superposition~\cite{mccoyTropp2014,mccoy2014convexity,soltani2016fast,SoltaniHegde_Asilomar,SoltaniHegde_Globalsip}. In statistical learning applications involving parameter estimation, such superpositions can be used to model situations when there is some ambiguity in the parameters (e.g., the true parameters can be treated as ``ground truth'' + ``outliers'') or when there is some existing prior knowledge that the true parameter vector is a superposition of two components. Mathematically, suppose that the parameter vector is given by $\beta =\Phi\theta_1 + \Psi \theta_2$ where $\beta,\theta_1, \theta_2\in\mathbb{R}^p$ and $\Phi, \Psi$ are orthonormal bases. 
If a linear observation model is assumed, then given samples $y\in\mathbb{R}^n$ and a design matrix $X \in \mathbb{R}^{n \times p}$, the goal is to recover the parameter vector $\beta$ that minimizes a loss function $\mathcal{L}(X,y; \beta)$. We focus on the sample-poor regime where the dimension far exceeds the number of samples; this regime has received significant attention from the machine learning and signal processing communities in recent years~\cite{negahban2009unified,CandesCS}.  

However, fitting the observations according to a linear model can be restrictive. One way to ease this restriction is to assume a \emph{nonlinear} observation model:
\begin{align}\label{nonlindex}
 y = g(X\beta) + e =g(X(\Phi\theta_1 + \Psi \theta_2) )+ e,
 \end{align}
where $g$ denotes a nonlinear \textit{link} function and $e$ denotes observation noise. This is akin to the \emph{Generalized Linear Model} (GLM) and \emph{Single Index Model} (SIM) commonly used in statistics~\cite{kakade2011}. Here, the problem is to estimate $w$ and $z$ from the observations $y$ with as few samples as possible.

The above estimation problem is challenging in several different aspects: (i) there is a basic identifiability of issue of obtaining $\theta_1$ and $\theta_2$ even with perfect knowledge of $\beta$; (ii) there is a second identifiability issue arising from the nontrivial null-space of the design matrix (since $n \ll p$); and (iii) the nonlinear nature of $g$, as well as the presence of noise $e$ can further confound recovery.

Standard techniques to overcome each of these challenges are well-known. By and large, these techniques all make some type of \emph{sparseness} assumption on the components $\theta_1$ and $\theta_2$ \cite{CandesCS}; some type of \emph{incoherence} assumption on the bases $\Phi$ and $\Psi$ \cite{elad2005simultaneous,donoho2006stable}; some type of \emph{restricted strong convexity} (RSC) 
~\cite{negahban2009unified}; and some type of \emph{Lipschitz} (\emph{restricted strong smoothness} (RSS)) assumptions on the link function $g$~\cite{yang2015sparse}. See section \ref{Perm} for details.

In this short paper, we demonstrate an algorithm that stably estimate the components $\theta_1$ and $\theta_2$ under general \emph{structured sparsity} assumptions on these components. Structured sparsity assumptions are useful in applications where the support patterns (i.e., the coordinates of the nonzero entries) belong to certain restricted families (for example, the support is assumed to be \emph{group-sparse} \cite{huang2010benefit}). It is known that such assumptions can significantly reduce the required number of samples for estimating the parameter vectors, compared to generic sparsity assumptions~\cite{modelcs,SPINIT,approxIT}. 

We note that demixing approaches in high dimensions with structured sparsity assumptions have appeared before in the literature~\cite{mccoyTropp2014,mccoy2014convexity,rao2014forward}. However, our method differs from these earlier works in a few different aspects. The majority of these methods involve solving a convex relaxation problem; in contrast, our algorithm is manifestly \emph{non-convex}. Despite this feature, for certain types of structured superposition models our method provably recovers the components given merely $n = \mathcal{O}(s)$ samples; moreover, our methods achieve a fast (linear) convergence rate, and also exhibits fast (near-linear) per-iteration complexity for certain types of structured models. Moreover, these earlier methods have not explicitly addressed the nonlinear observation model (with the exception of \cite{plan2014high}). We show that under certain smoothness assumptions on $g$, the performance of our method matches (in terms of asymptotics) the best possible sample-complexity.

\section{Preliminaries}
\label{Perm}
Let $\|.\|_q$ denote the $\ell_q$-norm of a vector. Denote the spectral norm of the matrix $X$ as $\|X\|$. Denote the true parameter vector, $\theta =  [\theta_1^T \ \theta_2^T ]^T\in\mathbb{R}^{2p}$ as the vector obtaining by stacking the true and unknown coefficient vectors,  $\theta_1, \theta_2$. For simplicity of exposition, we suppose that components $\theta_1$ and $\theta_2$ have block sparsity with sparsity $s$ and block size $b$~\cite{modelcs} (Analogous approaches apply for other structured sparsity models.)

The problem~\eqref{nonlindex} is inherently unidentifiable and to resolve this issue, we need to assume that the coefficient vectors $\theta_1, \theta_2$ are distinguishable from each other. This issue is characterized by a notion of incoherence of the components $\theta_1, \theta_2$~\cite{SoltaniHegde_Globalsip}.

\begin{definition}\label{incoherence}
The bases $\Phi$ and $\Psi$ are called $\varepsilon$-incoherent if 
$
\varepsilon = \sup_{\substack{\|u\|_0\leq s,\ \|v\|_0\leq s  \\ \|u\|_2 = 1,\ \|v\|_2 = 1}}|\langle{\Phi u, \Psi v}\rangle|.
$
\end{definition}
For the analysis of our proposed algorithm we need the following standard definition~\cite{negahban2009unified}:
\begin{definition}\label{rssrsc}
$f : \mathbb{R}^{2p} \rightarrow \mathbb{R}$ satisfies \new{Structured} \textit{Restricted Strong Convexity/Smoothness \new{(SRSC/SRSS)} }if:
\begin{align*}
m_{4s}\leq\|\nabla^2_{\xi} f(t)\|\leq M_{4s},\  \ t\in\mathbb{R}^{2p},
\end{align*}
where $\xi = \textrm{supp}(t_1)\cup \textrm{supp}(t_2)$, for all $t_i\in\mathbb{R}^{2p}$ such that \new{$t_i$ belongs to $(2s,b)$ block-sparse vectors} 
for $i=1,2$, and $m_{4s}$ and $M_{4s}$ are (respectively) the \new{SRSC and SRSS} constants. Also $\nabla^2_{\xi} f(t)$ denotes a $4s\times 4s$ sub-matrix of the Hessian matrix $\nabla^2 f(t)$ comprised of row/column indices indexed by $\xi$.
\end{definition}

Also, we assume that the derivative of the link function is strictly bounded either within a positive interval, or within a negative interval.

\section{Algorithm and main theory}
In this section, we describe our algorithm which we call it \emph{Structured Demixing with Hard Thresholding} (STRUCT-DHT) and our main theory. To solve demixing problem in~\eqref{nonlindex}, we consider the minimization of a special loss function $F(t)$ following~\cite{SoltaniHegde_Globalsip}:
\begin{equation} \label{optprob}
\begin{aligned} \underset{t \in \mathbb{R}^{2p}}{\text{min}}
\ \ F(t) &= \frac{1}{m}\sum_{i=1}^m \Theta(x_i^T\Gamma t) - y_i x_i^T\Gamma t \\
& \text{s.\ t.} \  \  t\in\mathcal{D}
\end{aligned}
\end{equation}
where $\Theta(x) = \int_{-\infty}^{x} g(u)du$ denotes as the integral of the link function $g$, $\Gamma = [\Phi \  \Psi]$, $x_i$ is the $i^{th}$ row of the design matrix $X$ and $\mathcal{D}$ denotes the set of length-$2p$ vectors formed by stacking a pair of $(s,b)$ block-sparse vectors. The objective function in~\eqref{optprob} is motivated by the single index model in statistics; for details, see~\cite{SoltaniHegde_Globalsip}. To approximately solve~\eqref{optprob}, we propose STRUCT-DHT which is detailed as Algorithm \ref{algHTM}.

\begin{algorithm}[t]
\caption{Structured Demixing with Hard Thresholding (STRUCT-DHT)
\label{algHTM}
}
\begin{algorithmic}
\State \textbf{Inputs:} Bases $\Phi$ and $\Psi$, design matrix $X$, link function $g$, observation $y$, sparsity $s$, step size $\eta'$. 
\State \textbf{Outputs:} Estimates  $\widehat{\beta}=\Phi\widehat{\theta_1} + \Psi\widehat{\theta_2}$, $\widehat{\theta_1}$, $\widehat{\theta_2}$
\State\textbf{Initialization:}
\State$\left(\beta^0, \theta_1^0, \theta_2^0\right)\leftarrow\textsc{random initialization}$
\State$k \leftarrow 0$
\While{$k\leq N$}
\State $t^k \leftarrow [ \theta_1^k ; \theta_2^k ]$\quad\quad\{Forming constituent vector\}
\State $t_1^k\leftarrow\frac{1}{m}\Phi^TX^T(g(X\beta^k) - y)$ 
\State$t_2^k\leftarrow\frac{1}{m}\Psi^TX^T(g(X\beta^k) - y)$
\State$\nabla F^k \leftarrow [ t_1^k ; t_2^k ]$
\quad\quad\{Forming gradient\}
\State${\tilde{t}}^k = t^k - \eta'\nabla F^k$
\quad\{Gradient update\}
\State$[ \theta_1^k ; \theta_2^k ]\leftarrow\mathcal{P}_{s;s}\left(\tilde{t}^k\right)$  
\quad\{Projection\}
\State$\beta^k\leftarrow\Phi \theta_1^k + \Psi \theta_2^k$\quad\{Estimating $\widehat{x}$\}
\State$k\leftarrow k+1$
\EndWhile
\State\textbf{Return:} $\left(\widehat{\theta_1}, \widehat{\theta_2}\right)\leftarrow \left(\theta_1^N, \theta_2^N\right)$
\end{algorithmic}
\end{algorithm}

At a high level, \textsc{STRUCT-DHT} tries to minimize loss function defined in~\eqref{optprob} (tailored to $g$) between the observed samples $y$ and the predicted responses $X\Gamma \widehat{t}$, where $\widehat{t} = [\widehat{\theta}_1; \ \widehat{\theta}_2]$ is the estimate of the parameter vector after $N$ iterations. The algorithm proceeds by iteratively updating the current estimate of $\widehat{t}$ based on a gradient update rule followed by (myopic) \emph{hard thresholding} of the residual onto the set of $s$-sparse vectors in the span of $\Phi$ and $\Psi$. Here, we consider a version of \textsc{DHT}~\cite{SoltaniHegde_Globalsip} which is applicable for the case that coefficient vectors $\theta_1$ and $\theta_2$ have block sparsity. For this setting, we replace the hard thresholding step, $\mathcal{P}_{s;s}$ by component-wise block-hard thresholding~\cite{modelcs}. Specifically, $\mathcal{P}_{s;s}(\tilde{t}^k)$ projects the vector $\tilde{t}^k\in\mathbb{R}^{2p}$ onto the set of concatenated $(s,b)$ block-sparse vectors by projecting the first and the second half of $\tilde{t}^k$ separately.  

Now, we provide our main theorem supporting the convergence analysis and sample complexity (required number of observations for successful estimation of $\theta_1, \theta_2$) of \textsc{STRUCT-DHT}.

\begin{theorem}
\label{mainThConvergence}
Consider the observation model~\eqref{nonlindex} with all the assumption and definitions mentioned in the section~\ref{Perm}. Suppose that the corresponding objective function $F$ satisfies the \new{Structured SRSS/SRSC} properties with constants $M_{6s}$ and $m_{6s}$ 
such that $1\leq\frac{M_{6s}}{m_{6s}}\leq\frac{2}{\sqrt{3}}$ . Choose a step size parameter $\eta'$ with $\frac{0.5}{M_{6s}}<\eta^{\prime}<\frac{1.5}{m_{6s}}$.
Then, \textsc{DHT} outputs a sequence of estimates $(\theta_1^k, \theta_1^k)$ ($t^{k+1} = [\theta_1^k; \theta_1^k]$) such that the estimation error of the parameter vector satisfies the following upper bound (in expectation) for any $k\geq 1$: 
\begin{align}
\label{eq:linconverge}
\|t^{k+1} - \theta\|_2\leq\left(2q\right)^k\|t^0-\theta\|_2 + C\tau\sqrt{\frac{s}{m}}, 
\end{align}
where $q = 2\sqrt{1+{\eta^{\prime}}^2M_{6s}^2-2\eta^{\prime} m_{6s}}$  and $C>0$ is a constant that  depends on the step size $\eta^{\prime}$ and the convergence rate $q$. Here, $\theta$ denotes the true parameter vector defined in section~\ref{Perm}.
\end{theorem}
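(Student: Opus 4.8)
The plan is to analyze \textsc{STRUCT-DHT} as a projected gradient descent iteration over the nonconvex set $\mathcal{D}$ and run the standard iterative hard thresholding error recursion, with the Structured SRSC/SRSS constants of $F$ playing the role of global curvature bounds. The first ingredient is that applying block-hard-thresholding to the two halves of $\tilde{t}^k = t^k - \eta'\nabla F(t^k)$ separately is an \emph{exact} projection onto $\mathcal{D}$; since the true $\theta$ lies in $\mathcal{D}$, a triangle-inequality argument gives $\|t^{k+1} - \theta\|_2 \le \|t^{k+1} - \tilde{t}^k\|_2 + \|\tilde{t}^k - \theta\|_2 \le 2\|\tilde{t}^k - \theta\|_2$.

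Next I would expand the gradient step around $\theta$. Writing, via the fundamental theorem of calculus, $\nabla F(t^k) - \nabla F(\theta) = H_k (t^k - \theta)$ with $H_k = \int_0^1 \nabla^2 F\big(\theta + \tau(t^k - \theta)\big)\,d\tau$, we obtain $\tilde{t}^k - \theta = (I - \eta' H_k)(t^k - \theta) - \eta' \nabla F(\theta)$. Let $\Omega = \mathrm{supp}(t^{k+1}) \cup \mathrm{supp}(t^k) \cup \mathrm{supp}(\theta)$; because each half of each of these three vectors is $(s,b)$ block-sparse, $\Omega$ is a union of blocks of total size at most $6s$. Since $t^{k+1} - \theta$ and $t^k - \theta$ are supported inside $\Omega$, only the $\Omega\times\Omega$ restriction of $H_k$ and the $\Omega$-restriction of $\nabla F(\theta)$ enter, and we get
\[
\|t^{k+1} - \theta\|_2 \le 2\,\big\|(I - \eta' H_k)|_{\Omega}\big\|\,\|t^k - \theta\|_2 \;+\; 2\eta'\,\big\|\nabla F(\theta)|_{\Omega}\big\|_2 .
\]
By the Structured SRSC/SRSS property (Definition~\ref{rssrsc}) at level $6s$, every $\nabla^2_\Omega F$ along the path has eigenvalues in $[m_{6s},M_{6s}]$, so $\|(I - \eta' H_k)|_{\Omega}\| \le \sqrt{1 + {\eta'}^2 M_{6s}^2 - 2\eta' m_{6s}}$; doubling gives precisely the contraction factor appearing in~\eqref{eq:linconverge}. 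The hypotheses $M_{6s}/m_{6s} \le 2/\sqrt{3}$ and $0.5/M_{6s} < \eta' < 1.5/m_{6s}$ are exactly calibrated so that the quadratic ${\eta'}^2 M_{6s}^2 - 2\eta' m_{6s} + 3/4$ is negative, i.e. the contraction factor is strictly below one.

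It remains to bound the ``noise floor'' term $2\eta'\|\nabla F(\theta)|_{\Omega}\|_2$. Using $\beta = \Gamma\theta$ and $y = g(X\Gamma\theta) + e$, one computes $\nabla F(\theta) = \tfrac{1}{m}\Gamma^T X^T\big(g(X\Gamma\theta) - y\big) = -\tfrac{1}{m}\Gamma^T X^T e$, a normalized sum of $m$ independent mean-zero vectors. Its restriction to a fixed block-structured index set of size $6s$ has Euclidean norm concentrating at scale $\tau\sqrt{s/m}$, where $\tau$ aggregates the noise level, $\|X\|$ and $\|\Gamma\|$; a union bound over the $\binom{p/b}{s/b}$ admissible block supports (far fewer than $\binom{p}{s}$, which is what suppresses the logarithmic factor and yields $n=\mathcal{O}(s)$) preserves this rate, giving $\mathbb{E}\|\nabla F(\theta)|_{\Omega}\|_2 \le C'\tau\sqrt{s/m}$. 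Finally I would iterate the one-step bound, sum the resulting geometric series (valid since the per-step factor is $<1$), and take expectations to obtain~\eqref{eq:linconverge} with $C$ of order $\eta' C'$ divided by one minus the contraction factor.

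I expect the main obstacle to be two intertwined points. First, the bookkeeping that keeps $\Omega$ of bounded block-sparsity at every iteration: because the computed gradient $\nabla F^k$ is dense, the contraction only materializes after restricting to $\Omega$, and one must argue carefully that it is genuinely only $\nabla^2_\Omega F$ (not the full Hessian) that controls the step. Second --- and this is where incoherence, the block-sparse model, and the bounds on $g'$ all come together --- one must verify that $F$, which depends on $t$ only through $X\Gamma$, actually satisfies Structured SRSC/SRSS with well-separated constants $m_{6s}, M_{6s}$ in the narrow ratio window the theorem requires: this means translating restricted strong convexity/smoothness of the sample loss in $\beta$ (for which the strictly bounded derivative of $g$ is used) together with the $\varepsilon$-incoherence of $\Phi,\Psi$ into curvature bounds for the stacked problem, and it is this step that ultimately dictates the sample complexity.
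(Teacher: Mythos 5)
Your overall architecture is the same as the paper's (which follows Theorem 4.6 of its reference \cite{soltani2016fast}): a one-step hard-thresholding recursion whose contraction factor comes from the spectral bound $\|I-\eta' \nabla^2_{\Omega}F\|\leq\sqrt{1+{\eta'}^2M_{6s}^2-2\eta' m_{6s}}$ on the Hessian restricted to a $6s$-sized block support, plus a bound on the restricted gradient at $\theta$ for the $\tau\sqrt{s/m}$ noise floor. However, there is a genuine gap at exactly the point you yourself flag as ``the main obstacle,'' and as written your chain of inequalities does not close it. After you bound $\|t^{k+1}-\theta\|_2\leq 2\|\tilde{t}^k-\theta\|_2$ using nonexpansiveness of the exact projection, the right-hand side is the norm of the \emph{dense} vector $(I-\eta' H_k)(t^k-\theta)-\eta'\nabla F(\theta)$: the rows of $H_k$ outside $\Omega\times\Omega$ and the coordinates of $\nabla F(\theta)$ outside $\Omega$ contribute mass that cannot be discarded. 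The support information you invoke concerns $t^{k+1}-\theta$ and $t^k-\theta$, which sit on the \emph{left} of that inequality (or inside $H_k(t^k-\theta)$ as an input, which does not localize the output); it does not let you replace $\|\tilde{t}^k-\theta\|_2$ by its restriction to $\Omega$. The paper's proof avoids this by choosing a different intermediate point: $b=t^k-\eta'\nabla_J F(t^k)$, i.e.\ the update with the gradient \emph{already restricted} to $J=\mathrm{supp}(t^k)\cup\mathrm{supp}(t^{k+1})\cup\mathrm{supp}(\theta)$, writing $\|t^{k+1}-\theta\|_2\leq\|t^{k+1}-b\|_2+\|b-\theta\|_2$ and then proving $\|t^{k+1}-b\|_2\leq 2\|b-\theta\|_2$. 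That last step is \emph{not} nonexpansiveness of a projection (note $t^{k+1}=\mathcal{P}_{s;s}(b')$ is the thresholding of the full update $b'$, not of $b$); it exploits that $t^{k+1}$, $t^k$ and $\theta$ are all supported in $J$ and that $b$ and $b'$ agree on $J$, so the optimality of the block-thresholding can be localized to $J$. With $b$ as the pivot, $\|b-\theta\|_2$ genuinely involves only $\nabla^2_J F$ along the path and $\nabla_J F(\theta)$, and your spectral bound then applies verbatim. You need to restructure the first step this way; the rest of your argument then goes through.

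A secondary divergence: the theorem is stated in expectation, and the paper controls $\mathbb{E}\|\nabla_J F(\theta)\|_2$ via the Khintchine inequality, whereas you propose a union bound over the $\binom{p/b}{s/b}$ admissible block supports. Your route would yield a high-probability statement and is the natural companion to the sample-complexity theorem, but it must be carried out uniformly over supports precisely because $J$ depends on the (random) iterates; either way the $\tau\sqrt{s/m}$ scaling is the same.
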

\begin{proof}[Proof sketch]
The proof follows the technique used to prove Theorem 4.6 in~\cite{soltani2016fast}. The main steps are as follows.
Let $b'\in\mathbb{R}^{2p} =[b_1';b_2']=  t^k - \eta'\nabla F(t^k)$, $b = t^k - \eta'\nabla_J F(t^k)$ where $J = \text{supp}(t^k)\cup \text{supp}(t^{k+1})\cup \text{supp}(\theta)$ and $b_1', \  b_2'\in\mathbb{R}^{p}$ (Here, $\theta = [\theta_1;\theta_2]$ denotes the true parameter vector). Also define $\ t^{k+1} = \mathcal{P}_{s;s}(b') = [\mathcal{P}_s(b'_1); \mathcal{P}_s(b'_2)]$. Now, by the triangle inequality, we have:
$\|t^{k+1} - \theta\|_2\leq \|t^{k+1} - b\|_2 + \|b- \theta\|_2$.
The proof is completed by showing that $\|t^{k+1} - b\|_2\leq 2\|b - \theta\|_2$.
Finally, we use the Khintchine inequality~\cite{vershynin2010introduction} to bound the expectation of the $\ell_2$-norm of the restricted gradient function, $\nabla F(\theta)$ (evaluated at the true parameter vector $\theta$) with respect to the support set $J$).
\end{proof}

\begin{figure}
\begin{center}
\begingroup
\setlength{\tabcolsep}{.1pt} 
\renewcommand{\arraystretch}{.1} 
\begin{tabular}{cc}      
\includegraphics[trim = 8mm 58mm 15mm 30mm, clip, width=0.45\linewidth]{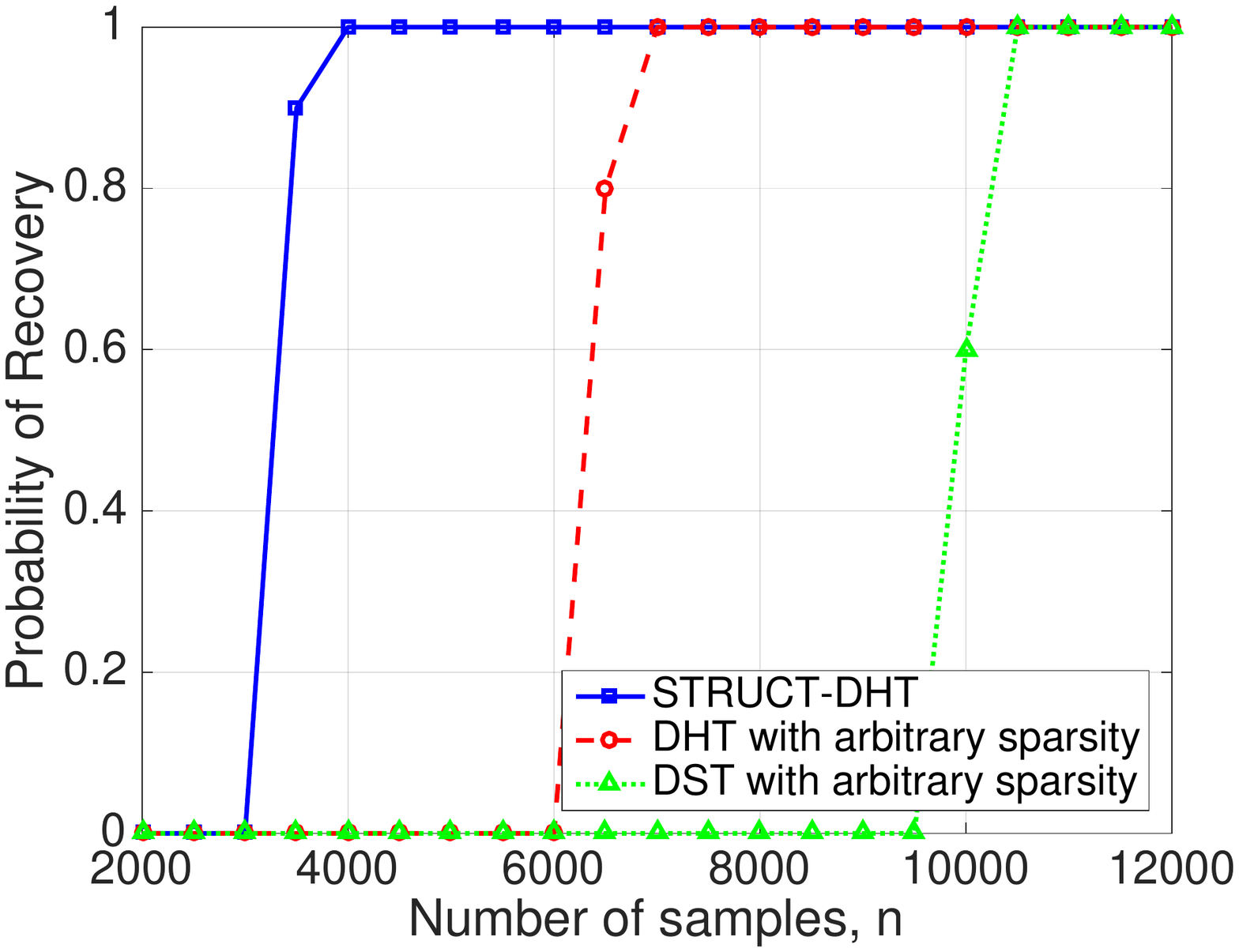}&
\includegraphics[trim = 8mm 58mm 15mm 30mm, clip, width=0.45\linewidth]{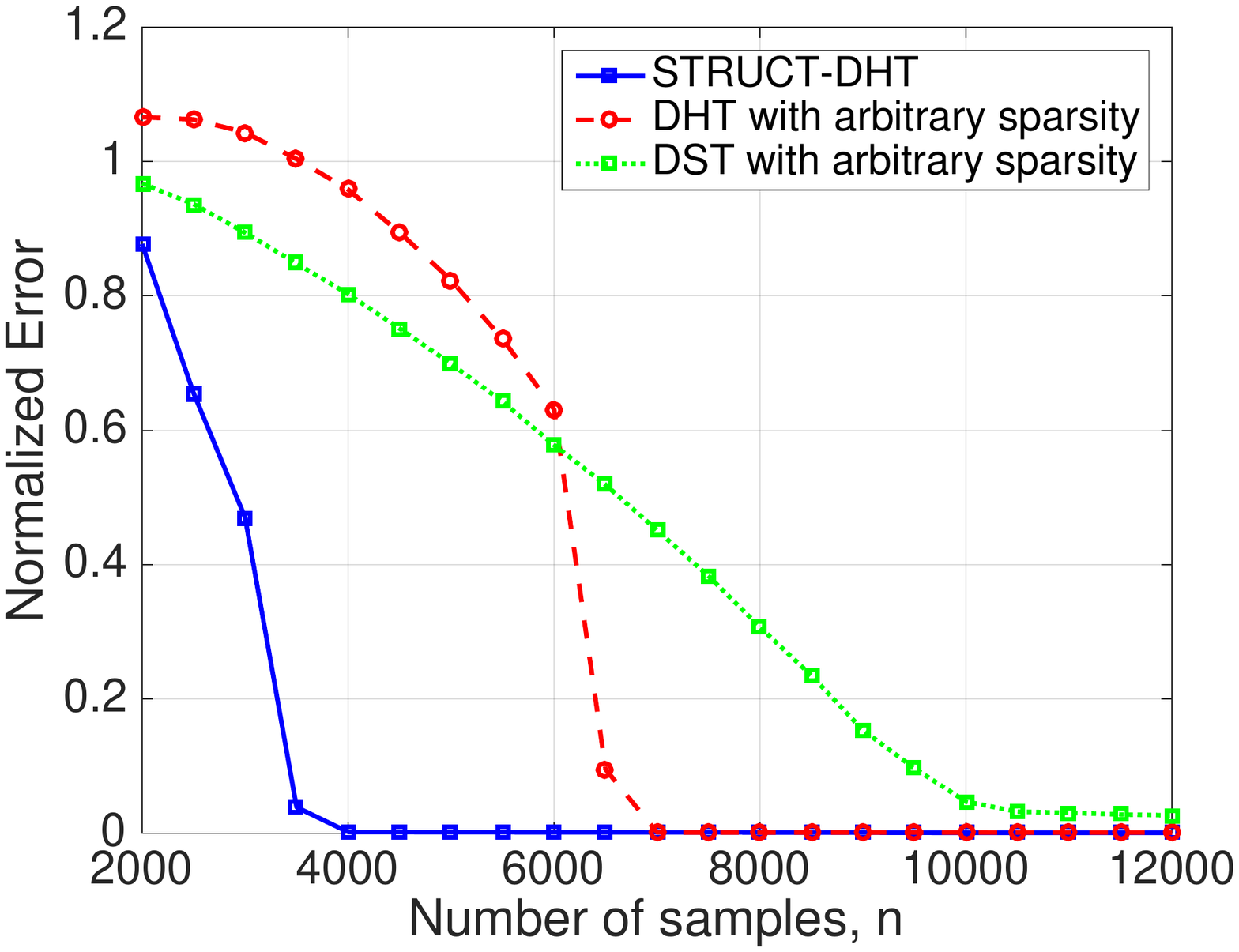}\\
(a) & (b) 
\end{tabular}
\endgroup
\end{center}
\caption{\small{\emph{Comparison of \textsc{DHT} with structured sparsity with other algorithms. (a) Probability of recovery in terms of normalized error. (b) Normalized error between $\widehat{\beta} =  \Phi \widehat{\theta_1} + \Psi \widehat{\theta_2}$ and true $\beta$.}}}
\label{fig:ComparisonSyn}
\end{figure}

Inequality~\eqref{eq:linconverge} indicates the linear convergence behavior of our proposed algorithm. Specifically, in the noiseless scenario to achieve $\kappa$-accuracy in estimating the parameter vector $\widehat{t} = [\widehat{\theta}_1; \ \widehat{\theta}_2]$, \textsc{Struct-DHT} only requires $\log\left(\frac{1}{\kappa}\right)$ iterations.
We also have the following theorem regarding the sample complexity of Alg.\ \ref{algHTM}:
\begin{theorem}
If the rows of $X$ are independent subgaussian random vectors~\cite{vershynin2010introduction}, then the required number of samples for successful estimation of the components, $n$ is given by $\mathcal{O}\left(\frac{s}{b}\log\frac{p}{s}\right)$. Furthermore, if $b = {\Omega}\left(\log\frac{p}{s}\right)$, then the sample complexity of our proposed algorithm is given by $n = \mathcal{O}(s)$, which is asymptotically optimal.
\end{theorem}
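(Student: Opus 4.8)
The plan is to verify that, with $n=\mathcal{O}\!\left(\tfrac{s}{b}\log\tfrac{p}{s}\right)$ independent subgaussian rows, the loss $F$ in~\eqref{optprob} satisfies \emph{all} the hypotheses of Theorem~\ref{mainThConvergence}, and that the statistical term in~\eqref{eq:linconverge} is of the order $\tau\sqrt{s/m}$; the claimed bound then follows by directly invoking Theorem~\ref{mainThConvergence}, and the matching lower bound is a degrees-of-freedom argument. So there are three things to establish: (i) the Structured SRSC/SRSS constants $m_{6s},M_{6s}$ exist and satisfy $M_{6s}/m_{6s}\le 2/\sqrt3$ with the required step-size window nonempty; (ii) this happens with high probability once $n$ meets the stated bound; (iii) $\|\nabla_J F(\theta)\|_2 = \mathcal{O}(\tau\sqrt{s/m})$ on the same event.

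First I would reduce (i) to a model-based restricted-isometry statement for $\tfrac1{\sqrt m}X\Gamma$. Writing the Hessian $\nabla^2 F(t)=\tfrac1m\sum_{i=1}^m g'(x_i^T\Gamma t)\,\Gamma^T x_ix_i^T\Gamma$ and using the standing assumption that $g'$ is bounded inside a positive interval, say $0<\ell\le g'\le L$, the restricted Hessian $\nabla^2_\xi F(t)$ is sandwiched in the PSD order (on the relevant subspace) between $\ell\cdot\tfrac1m\Gamma^T X^TX\Gamma|_\xi$ and $L\cdot\tfrac1m\Gamma^T X^TX\Gamma|_\xi$, uniformly over all admissible $t$, since $g'$ enters only through pointwise evaluations in $[\ell,L]$. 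Hence it suffices that $\tfrac1{\sqrt m}X\Gamma$ be a $\delta$-near-isometry, with $\delta$ as small as we wish, on the set of vectors $\Gamma t$ where $t$ ranges over the $6s$-level stacked block-sparse model (two $(3s,b)$ block-sparse pieces). Combining such a model-RIP with the $\varepsilon$-incoherence of Definition~\ref{incoherence} to control the cross terms between $\Phi$ and $\Psi$, one gets $m_{6s}\ge \ell(1-\delta)(1-\varepsilon)$ and $M_{6s}\le L(1+\delta)(1+\varepsilon)$; choosing $\delta$ and $\varepsilon$ small enough then forces $M_{6s}/m_{6s}\le 2/\sqrt3$ and leaves the interval $\left(\tfrac{0.5}{M_{6s}},\tfrac{1.5}{m_{6s}}\right)$ nonempty.

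Next I would establish the model-RIP itself and count samples. The $6s$-level stacked block-sparse model is a union of at most $\binom{p/b}{3s/b}^2\le (ep/s)^{6s/b}$ subspaces, each of dimension at most $6s$. For a fixed such subspace, standard non-asymptotic concentration for subgaussian matrices gives that $\tfrac1{\sqrt m}X\Gamma$ is a $\delta$-isometry on it except with probability $e^{-c\delta^2 m}$ once $m\gtrsim \delta^{-2}\cdot(\text{subspace dimension})$; a union bound over all subspaces requires $m=\mathcal{O}\!\left(\delta^{-2}\bigl(s+\tfrac{s}{b}\log\tfrac{p}{s}\bigr)\right)$, which equals $\mathcal{O}\!\left(\tfrac{s}{b}\log\tfrac{p}{s}\right)$ when $b=O(\log\tfrac{p}{s})$ and $\mathcal{O}(s)$ when $b=\Omega(\log\tfrac{p}{s})$ — matching the theorem in both regimes. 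For (iii), the quantity entering the tail of~\eqref{eq:linconverge} is the restriction to $J$ of $\nabla F(\theta)=\tfrac1m\sum_i\bigl(g(x_i^T\Gamma\theta)-y_i\bigr)\Gamma^T x_i=-\tfrac1m\sum_i e_i\,\Gamma^T x_i$; since $\Gamma^T x_i$ is subgaussian, a Khintchine-type bound (as in the proof sketch of Theorem~\ref{mainThConvergence}) together with a union bound over the same index set yields $\|\nabla_J F(\theta)\|_2=\mathcal{O}\!\bigl(\tau\sqrt{s/m}\bigr)$ with high probability, $\tau$ being the noise proxy. Intersecting the two high-probability events and applying Theorem~\ref{mainThConvergence} gives linear convergence to an $\mathcal{O}(\tau\sqrt{s/m})$-ball, i.e.\ successful estimation, with the stated $n$. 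Optimality is immediate: the unknown carries $\Theta(s)$ free parameters even with known supports, so no procedure succeeds with $o(s)$ samples, whence $n=\mathcal{O}(s)$ (attained when $b=\Omega(\log\tfrac{p}{s})$) is asymptotically optimal.

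The main obstacle I expect is driving the RIP constant $\delta$ \emph{and} the incoherence $\varepsilon$ small enough \emph{simultaneously} to meet the rather tight conditioning requirement $M_{6s}/m_{6s}\le 2/\sqrt3$ inherited from Theorem~\ref{mainThConvergence}, while keeping the sample count at the claimed order — that is, making sure the constants absorbed into "$\mathcal{O}(\cdot)$" do not blow up when the conditioning window is this narrow, and checking that the block / union-of-subspaces counting genuinely produces the $\tfrac{s}{b}$ (rather than $s$) log-factor scaling. Tracking the effect of the concatenation $\Gamma=[\Phi\ \Psi]$ — where the per-basis effective sparsity and the $\varepsilon$-governed cross terms must be carried through the restricted-isometry argument — is precisely where the \emph{demixing} aspect bites, as opposed to an off-the-shelf structured-sparse recovery bound.
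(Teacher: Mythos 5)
Your proposal is correct and follows essentially the same route as the paper, which itself only sketches the argument by deferring to Theorem 4.8 of~\cite{soltani2016fast}: establish SRSC/SRSS for $F$ via subgaussian concentration plus a union bound over the $(s,b)$ block-sparse model, whose cardinality $\binom{p/b}{O(s/b)}$ is exactly what replaces the generic $\log\binom{p}{s}$ factor by $\frac{s}{b}\log\frac{p}{s}$ and yields $\mathcal{O}(s)$ when $b=\Omega(\log\frac{p}{s})$. Your write-up is in fact more complete than the paper's (it makes explicit the reduction from the Hessian to a model-RIP of $\frac{1}{\sqrt{m}}X\Gamma$ via the bounds on $g'$, the role of $\varepsilon$-incoherence in the cross terms, and the degrees-of-freedom lower bound for optimality), and it correctly flags the additive $s$ term in the subspace-dimension count that the theorem statement implicitly absorbs.
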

\begin{proof}
The proof is similar to the proof of Theorem 4.8 in~\cite{soltani2016fast} where we derived upper bounds on the sample complexity by proving the RSC/RSS for the objective function $F$. Here, the steps are essentially the same as in~\cite{soltani2016fast}, except that we need to compute union bound over the set of $(s,b)$ block-sparse vectors. This set is considerably smaller than the set of \emph{all} sparse vectors and results in an asymptotic gain in sample complexity.  
\end{proof}
The big-Oh constant hides dependencies on various parameters, including the coherence parameter $\varepsilon$, as well as the upper bound and lower bounds on the derivative of the link function $g$.

\section{Numerical results}

To show the efficacy of \textsc{Struct-DHT} for demixing components with structured sparsity, we numerically compare \textsc{Struct-DHT} with ordinary \textsc{DHT} (which does \emph{not} leverage structured sparsity), and also with an adaptation of a convex formulation described in~\cite{yang2015sparse} that we call \emph{Demixing with Soft Thresholding} (\textsc{DST}). We first generate true components $\theta_1$ and $\theta_2$ with length $p = 2^{16}$ with nonzeros grouped in blocks with length $b = 16$ and total sparsity $s = 656$. The nonzero (active) blocks are randomly chosen from a uniform distribution over all possible blocks. 
We construct a design (observation) matrix following the construction of~\cite{krahmer2011new}. Finally, we use a (shifted) sigmoid link function given by $g(x) = \frac{1-e^{-x}}{1 + e^{-x}}$ to generate the observations $y$.  Fig~\ref{fig:ComparisonSyn} shows the the performance of the three algorithms with different number of samples averaged over $10$ Monte Carlo trials. In Fig~\ref{fig:ComparisonSyn}(a), we plot the probability of successful recovery, defined as the fraction of trials where the normalized error is less than 0.05. Fig~\ref{fig:ComparisonSyn}(b) just shows the normalized estimation error for these algorithms. As we can see, \textsc{Struct-DHT} shows much better sample complexity (the required number of samples for obtaining small relative error) as compared to \textsc{DHT} and \textsc{DST}. 

\bibliographystyle{unsrt}
\bibliography{Common/chinbiblio.bib,Common/csbib.bib,Common/mrsbiblio.bib}

\begin{thebibliography}{10}

\bibitem{mccoyTropp2014}
M.~McCoy and J.~Tropp.
\newblock Sharp recovery bounds for convex demixing, with applications.
\newblock {\em Foundations of Comp. Math.}, 14(3):503--567, 2014.

\bibitem{mccoy2014convexity}
M.~McCoy, V.~Cevher, Q.~Dinh, A.~Asaei, and L.~Baldassarre.
\newblock Convexity in source separation: Models, geometry, and algorithms.
\newblock {\em IEEE Sig. Proc. Mag.}, 31(3):87--95, 2014.

\bibitem{soltani2016fast}
M.~Soltani and C.~Hegde.
\newblock Fast algorithms for demixing sparse signals from nonlinear
  observations.
\newblock {\em arXiv preprint arXiv:1608.01234}, 2016.

\bibitem{SoltaniHegde_Asilomar}
M.~Soltani and C.~Hegde.
\newblock Demixing sparse signals from nonlinear observations.
\newblock In {\em Proc. Asilomar Conf. Sig. Sys. Comp.}, Nov. 2016.

\bibitem{SoltaniHegde_Globalsip}
M.~Soltani and C.~Hegde.
\newblock A fast iterative algorithm for demixing sparse signals from nonlinear
  observations.
\newblock In {\em Proc. {IEEE} Global Conf. Signal and Image Processing
  (GlobalSIP)}, Dec. 2016.

\bibitem{negahban2009unified}
S.~Negahban, B.~Yu, M.~Wainwright, and P.~Ravikumar.
\newblock A unified framework for high-dimensional analysis of $ m $-estimators
  with decomposable regularizers.
\newblock In {\em Adv. Neural Inf. Proc. Sys. (NIPS)}.

\bibitem{CandesCS}
E.~Cand\`{e}s.
\newblock Compressive sampling.
\newblock In {\em Proc. Int. Congress of Math.}, Madrid, Spain, Aug. 2006.

\bibitem{kakade2011}
S.~Kakade, V.~Kanade, O.~Shamir, and A.~Kalai.
\newblock Efficient learning of generalized linear and single index models with
  isotonic regression.
\newblock In {\em Adv. Neural Inf. Proc. Sys. (NIPS)}, pages 927--935, 2011.

\bibitem{elad2005simultaneous}
M.~Elad, J.~Starck, P.~Querre, and D.~Donoho.
\newblock Simultaneous cartoon and texture image inpainting using morphological
  component analysis {(MCA)}.
\newblock {\em Appl. Comput. Harmonic Analysis}, 19(3):340--358, 2005.

\bibitem{donoho2006stable}
D.~Donoho, M.~Elad, and V.~Temlyakov.
\newblock Stable recovery of sparse overcomplete representations in the
  presence of noise.
\newblock {\em Information Theory, IEEE Transactions on}, 52(1):6--18, 2006.

\bibitem{yang2015sparse}
Z.~Yang, Z.~Wang, H.~Liu, Y.~Eldar, and T.~Zhang.
\newblock Sparse nonlinear regression: Parameter estimation and asymptotic
  inference.
\newblock {\em arXiv preprint arXiv:1511.04514}, 2015.

\bibitem{huang2010benefit}
J.~Huang and T.~Zhang.
\newblock The benefit of group sparsity.
\newblock {\em The Annals of Statistics}, 38(4):1978--2004, 2010.

\bibitem{modelcs}
R.~Baraniuk, V.~Cevher, M.~Duarte, and C.~Hegde.
\newblock Model-based compressive sensing.
\newblock {\em IEEE Trans. Inform. Theory}, 56(4):1982--2001, Apr. 2010.

\bibitem{SPINIT}
C.~Hegde and R.~Baraniuk.
\newblock Signal recovery on incoherent manifolds.
\newblock {\em IEEE Trans. Inform. Theory}, 58(12):7204--7214, Dec. 2012.

\bibitem{approxIT}
C.~Hegde, P.~Indyk, and L.~Schmidt.
\newblock Approximation algorithms for model-based compressive sensing.
\newblock {\em IEEE Trans. Inform. Theory}, 61(9):5129--5147, 2015.

\bibitem{rao2014forward}
N.~Rao, P.~Shah, and S.~Wright.
\newblock Forward-backward greedy algorithms for signal demixing.
\newblock In {\em Proc. Asilomar Conf. Sig. Sys. Comput.}, pages 437--441,
  2014.

\bibitem{plan2014high}
Y.~Plan, R.~Vershynin, and E.~Yudovina.
\newblock High-dimensional estimation with geometric constraints.
\newblock {\em arXiv preprint arXiv:1404.3749}, 2014.

\bibitem{vershynin2010introduction}
R.~Vershynin.
\newblock Introduction to the non-asymptotic analysis of random matrices.
\newblock {\em arXiv preprint arXiv:1011.3027}, 2010.

\bibitem{krahmer2011new}
F.~Krahmer and R.~Ward.
\newblock New and improved johnson-lindenstrauss embeddings via the restricted
  isometry property.
\newblock {\em SIAM J. Math. Anal.}, 43(3):1269--1281, 2011.

\end{thebibliography}

\end{document}